\title[Convergence for Discrete Parameter Updates]{Convergence for Discrete Parameter Update Schemes}
    \thanks{I thank Diptarko Roy for helpful discussions.}
    \thanks{The first two authors acknowledge support from ARIA's Safeguarded AI programme}
\numberwithin{equation}{section}
\newtheorem{assumption}[theorem]{Assumption}
\newcommand{\defeq}[0]{\ensuremath{:=}}
\newcommand{\Rbb}[0]{\ensuremath{\mathbb{R}}}
\newcommand{\Nbb}[0]{\ensuremath{\mathbb{N}}}
\newcommand{\Zbb}[0]{\ensuremath{\mathbb{Z}}}
\newcommand{\Ebb}[0]{\ensuremath{\mathbb{E}}}
\newcommand{\Real}[0]{\ensuremath{\Rbb}}
\newcommand{\Nat}[0]{\ensuremath{\Nbb}}
\newcommand{\Int}[0]{\ensuremath{\Zbb}}
\newcommand{\E}[1]{\ensuremath{\Ebb\left[#1\right]}}
\newcommand{\Ec}[2]{\ensuremath{\Ebb_{#1}\!\left[#2\right]}}
\newcommand{\sign}[0]{\ensuremath{\operatorname{sign}}}
\newcommand{\Multinomial}[0]{\ensuremath{\mathsf{Multinomial}}}
\newcommand{\ZIMultinomial}[0]{\ensuremath{\mathsf{ZIMultinomial}}}
\newcommand{\norm}[1]{\left\lVert#1\right\rVert}
\newcommand{\Finf}[0]{\ensuremath{F_{\mathsf{inf}}}}
\begin{document}
\maketitle

\begin{abstract}
  \label{section:abstract}
  Modern deep learning models require immense computational resources, motivating research into low-precision training.
Quantised training addresses this by representing training components in low-bit integers, but typically relies on discretising real-valued updates.
We introduce an alternative approach where the update rule itself is discrete, avoiding the quantisation of continuous updates by design.
We establish convergence guarantees for a general class of such discrete schemes, and present a multinomial update rule as a concrete example, supported by empirical evaluation. This perspective opens new avenues for efficient training, particularly for models with inherently discrete structure.

\end{abstract}

\section{Introduction}
\label{section:introduction}
State-of-the-art deep learning models comprise hundreds of billions of parameters and require sizeable computational and memory resources. As models continue to scale, hardware constraints become a central challenge, motivating research into low-precision computation and memory-efficient designs.
Among such approaches, quantised training~\cite{hubara2016quantizedneuralnetworkstraining,guo2018surveymethodstheoriesquantized,chmiel2025fp4wayfullyquantized} seeks to train models using low-precision numerical representations (e.g. 8-bit, 4-bit, or even 2-bit integers) for various components of training.
Recent work has demonstrated that  quantised training is feasible at scale~\cite{peng2023fp8lmtrainingfp8large,fishman2025scalingfp8trainingtrilliontoken,deepseekai2025deepseekv3technicalreport}, with stability issues addressed through a variety of techniques.

A common trait of these approaches is that updates are first computed in real values and then discretised via a quantisation function, usually based on rounding (e.g. round-to-nearest or stochastic rounding).
In this paper, we propose an alternative route: instead of discretising continuous updates, we assume from the outset that the update function itself is discrete (integer-valued).
Our broader motivation is to identify models where directly discrete updates offer greater efficiency than quantisation. 
We expect such cases to arise in particular when training non-real-valued systems such as arithmetic or Boolean circuits, as developed in~\cite{DBLP:journals/jlap/WilsonZ23,DBLP:journals/corr/abs-2101-10488}, and in binarised neural networks~\cite{Qin_2020}.
A more systematic exploration of such schemes is left for future work; in
this paper, we focus on laying the mathematical foundations of our approach.

Perhaps the work closest in spirit to ours is BOLD~\cite{bold}: it introduces a fully-discrete update scheme that completely eliminates the need
for floating point latent values, yielding a significant savings in memory  requirements during both training and inference.
BOLD relies on a bespoke backpropagation framework tailored to boolean
architectures, with convergence analysis specific to that setting.
By contrast, our aim is to provide a general framework for discrete updates
under relatively weak assumptions; these encompass boolean-specific methods
such as BOLD, while opening up a broader space of discrete training schemes. We return to the comparison in Section \ref{section:multinomial}.

Our contribution is structured as follows. Section \ref{section:dsg} introduces the assumptions underlying discrete update schemes and presents our main mathematical result: a convergence theorem.
Section \ref{section:multinomial} provides a concrete example of such a scheme, which we then evaluate empirically in Section \ref{section:empirical}.

\section{Discrete Gradient Updates and their Convergence}
\label{section:dsg}
The standard stochastic gradient update may be written with learning rate sequence $\alpha_k$,
update function $g$ and random variable $\xi_k$, as
\[
  w_{k+1} \leftarrow w_k - \alpha_k \cdot g(w_k, \xi_k).
\]
In contrast, we seek an update that can be computed with
simple, performant fixed-precision operations. We propose the following scheme.
\begin{definition}[Discrete Stochastic Gradient Update]
  \label{definition:dsg}
  \[
    w_{k+1} \leftarrow w_k - \bar{g}(w_k, \xi_k)
  \]
  where $\bar{g} : \Real^d \to S^d \subseteq \Int^d$ is a `discrete' (i.e., integer-valued) function.
\end{definition}

We now focus on showing convergence of such scheme. It is convenient to follow the framework of the work \cite{optimization}, which focuses on convergence of (standard) stochastic gradient update. In a nutshell, we  `discretise' the assumptions used in \cite{optimization}, and follow a similar proof strategy to demonstrate convergence of discrete gradient update. The first requirement is Lipschitz continuity of gradients of the objective function $F$ (\emph{cf.} Assumption 4.1, \cite{optimization}), as follows.

\begin{assumption}[Lipschitz-Continuous objective gradients]
  \label{assumption:4.1}
  The objective function $F : \Real^d \to \Real$ is continuously differentiable and the gradient function of $F$, namely $\nabla F : \Real^d \to \Real^d$, is Lipschitz continuous with Lipschitz constant $L > 0$, i.e.
  \[
    \norm{\nabla F(w) - \nabla F(\bar{w})}_2 \leq L \norm{w - \bar{w}}_2
  \]
  for all $w, \bar{w} \in \Real^d$
\end{assumption}
\noindent
This implies the following inequality, shown in \cite[(4.3), Appendix B]{optimization}).
\begin{equation}
  \label{equation:4.3}
  F(w) \leq F(\bar{w}) + \nabla F(\bar{w})^T(w - \bar{w}) + \frac{1}{2} L \norm{w - \bar{w}}^2_2
\end{equation}

The second assumption poses requirements on the update.
\begin{assumption}[Update bound]
  \label{assumption:dsg}
  Let $\bar{g}$ be a discrete gradient update as defined in Definition \ref{definition:dsg},
  $\alpha_k$ a sequence of learning rates,
  and $F$ an objective function satisfying Assumption \ref{assumption:4.1}.
  \begin{enumerate}[label=(\alph*)]
    \item The sequences of iterates $w_k$ is contained in an open set over which
      $F$ is bounded below by a scalar $\Finf$ \cite[Assumption 4.3 (a)]{optimization}
    \item There exist scalars $\mu > 0$, $M \geq 0$, and $M_G > 0$ such that:
      \begin{equation}
        \label{equation:dot-bound}
        \nabla F(w_k)^T \Ec{\xi_k}{\bar{g}(w_k, \xi_k)} \geq \alpha_k \mu \norm{\nabla F(w_k)}_2^2
      \end{equation}
      \begin{equation}
        \label{equation:moment2-bound}
        \Ec{\xi_k}{\norm{\bar{g}(w_k, \xi_k)}_2^2} \leq \alpha_k M + \alpha_k^2 M_G \norm{\nabla F(w_k)}_2^2
      \end{equation}
  \end{enumerate}
\end{assumption}
Conditions (a) and (b) are similar to \cite[Assumption 4.3]{optimization}, except in
(b) we directly assume a bound on the second moment instead of deriving it from
bounds on the 2-norm and variance.
This will allow our proof of convergence to closely mirror the approach in
\cite[Theorem 4.8]{optimization}.
We are now ready to state our convergence result under these assumptions.
\begin{proposition}
  \label{proposition:convergence-fixed-learning-rate}
  Let $F$ be a (possibly non-convex) function, and fix $\alpha_k = \bar{\alpha}$ for all $k \in \Nat$ where
  \[ 0 < \bar{\alpha} \leq \frac{\mu}{L M_G} \]
  Then we have
  \begin{align*}
    \E{\frac{1}{K} \sum_{k=1}^K \norm{\nabla F(w_k)}_2^2}
      \quad \leq & \quad \frac{2(F(w_1) - \Finf)}{K \mu \bar{\alpha}} + \frac{LM}{\mu} \\
      \xrightarrow{K \to \infty} & \quad \frac{L M}{\mu}
  \end{align*}
\end{proposition}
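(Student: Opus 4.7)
The plan is to mirror the standard nonconvex SGD convergence argument of Bottou–Curtis–Nocedal closely, adapted to the fact that here the learning rate $\bar{\alpha}$ does not multiply the update explicitly but instead enters through the two moment bounds in Assumption \ref{assumption:dsg}(b).

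First, I would apply the smoothness inequality (\ref{equation:4.3}) with $w = w_{k+1}$ and $\bar{w} = w_k$, using $w_{k+1} - w_k = -\bar{g}(w_k, \xi_k)$, to get
\[
F(w_{k+1}) \leq F(w_k) - \nabla F(w_k)^T \bar{g}(w_k, \xi_k) + \tfrac{1}{2} L \norm{\bar{g}(w_k, \xi_k)}_2^2.
\]
Taking the conditional expectation $\Ec{\xi_k}{\cdot}$ and substituting the two bounds from Assumption \ref{assumption:dsg}(b) with $\alpha_k = \bar{\alpha}$, the first-moment lower bound and the $\bar{\alpha} M + \bar{\alpha}^2 M_G \norm{\nabla F(w_k)}_2^2$ upper bound on the second moment combine to yield
\[
\Ec{\xi_k}{F(w_{k+1})} - F(w_k) \;\leq\; -\bar{\alpha}\bigl(\mu - \tfrac{1}{2} L \bar{\alpha} M_G\bigr)\norm{\nabla F(w_k)}_2^2 + \tfrac{1}{2} L \bar{\alpha} M.
\]

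Next, the hypothesis $\bar{\alpha} \leq \mu/(L M_G)$ gives $L\bar{\alpha}M_G \leq \mu$, so the bracketed coefficient is at least $\mu/2$. This leaves the clean per-step descent
\[
\Ec{\xi_k}{F(w_{k+1})} - F(w_k) \;\leq\; -\tfrac{1}{2}\bar{\alpha}\mu\norm{\nabla F(w_k)}_2^2 + \tfrac{1}{2} L \bar{\alpha} M.
\]
Taking total expectation, telescoping from $k=1$ to $K$, and invoking $F(w_{K+1}) \geq \Finf$ from Assumption \ref{assumption:dsg}(a) gives
\[
\tfrac{1}{2} \bar{\alpha}\mu \sum_{k=1}^K \E{\norm{\nabla F(w_k)}_2^2} \;\leq\; F(w_1) - \Finf + \tfrac{1}{2} K L \bar{\alpha} M,
\]
and dividing by $K\bar{\alpha}\mu/2$ produces exactly the claimed bound, with the $K \to \infty$ limit reducing to the residual $LM/\mu$.

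I do not anticipate a hard obstacle: everything reduces to routine manipulation once the two bounds are substituted into the descent lemma. The only subtle bookkeeping is that in the original SGD proof the learning rate multiplies the update, whereas here it appears only inside the moment bounds, so one must verify that the same algebraic structure emerges with $\bar{\alpha}$ playing the role of an effective step size. The step-size restriction $\bar{\alpha} \leq \mu/(L M_G)$ is precisely what is needed to prevent the quadratic second-moment term from swamping the linear first-moment descent term, and it is the only place in the argument where the interplay between $L$, $\mu$, and $M_G$ matters.
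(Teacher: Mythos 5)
Your proposal is correct and follows essentially the same route as the paper: you simply inline the two auxiliary lemmas (the descent inequality from smoothness and the substitution of the moment bounds, which the paper isolates as Propositions \ref{proposition:alt-4.2} and \ref{proposition:alt-4.4}) before performing the identical telescoping argument. No gaps; the step-size condition is used exactly where the paper uses it.
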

\begin{proof}
  From Proposition \ref{proposition:alt-4.4} (Appendix \ref{section:dsg-lemmas}) and by assumption on $\bar{\alpha}$ we have
  \begin{align*}
    \Ec{\xi_k}{F(w_{k+1})} - F(w_k)
    & \leq - (\mu - \frac{1}{2} \bar{\alpha} L M_G) \bar{\alpha} \norm{\nabla F(w_k)}^2_2 + \frac{1}{2} \bar{\alpha} L M
      \tag{Proposition \ref{proposition:alt-4.4}} \\
    & \leq - \frac{1}{2} \mu \bar{\alpha} \norm{\nabla F(w_k)}^2_2 + \frac{1}{2} \bar{\alpha} L M
      \tag{Assumption on $\bar{\alpha}$}
  \end{align*}
  Taking the total expectation, we obtain
  \begin{align*}
    \E{F(w_{k+1})} - \E{F(w_k)}
    & \leq - \frac{1}{2} \mu \bar{\alpha} \E{\norm{\nabla F(w_k)}^2_2} + \frac{1}{2} \bar{\alpha} L M
  \end{align*}
  Now telescoping the sum for $k \in \{1 \ldots K\}$,
  \begin{align*}
    \Finf - F(w_1)
      & \leq \E{F(w_{K+1})} - F(w_1) \\
      & \leq - \sum_{k=1}^K \left( \frac{1}{2} \mu \bar{\alpha} \E{\norm{\nabla F(w_k)}_2^2} + \frac{1}{2} \bar{\alpha} L M \right) \\
      & = - \frac{1}{2} \mu \bar{\alpha} \sum_{k=1}^K \E{\norm{\nabla F(w_k)}_2^2} + \frac{1}{2} \bar{\alpha} K L M
  \end{align*}
  Now rearranging,
  \begin{align*}
    \frac{1}{2} \mu \bar{\alpha} \sum_{k=1}^K \E{\norm{\nabla F(w_k)}_2^2}
      \leq & \: F(w_1) - \Finf + \frac{1}{2} \bar{\alpha} K L M \\
    \sum_{k=1}^K \E{\norm{\nabla F(w_k)}_2^2}
      \leq & \: \frac{2(F(w_1) - \Finf)}{\mu \bar{\alpha}} + \frac{K L M}{\mu}
           \tag{Divide by $\frac{1}{2} \mu \alpha$} \\
    \E{\frac{1}{K} \sum_{k=1}^K \norm{\nabla F(w_k)}_2^2}
      \leq & \: \frac{2(F(w_1) - \Finf)}{K \mu \bar{\alpha}} + \frac{L M}{\mu}
           \tag{Divide by $K$, $\Ebb$ Linearity} \\
      \xrightarrow{K \to \infty} & \: \frac{L M}{\mu}
  \end{align*}
\end{proof}

Note the term $\frac{L M}{\mu}$ does not involve the learning rate
$\bar{\alpha}$, in contrast to \cite[Theorem 4.8]{optimization}.
This is the error bound resulting from the use of discrete weights, as common
in other work on quantized learning \cite{bold, training-quantized-nets,
dimension-free-bounds}.

\section{Example: Multinomial}
\label{section:multinomial}
We now study a specific discrete gradient update $\bar{g}$
satisfying Assumption \ref{assumption:dsg}.
Briefly, the idea is that $\bar{g}(w_k, \xi_k)$ samples from a `zero-inflated'
multinomial distribution whose probabilities are proportional to the gradient
vector, while the `zero inflation' incorporates a learning rate.

\begin{definition}[Zero-inflated multinomial]
  Let $n \in \Nat$ represent a number of trials, $0 \leq r \leq 1$ a probability,
  and $q \in \Real^d$ be a discrete probability distribution.
  Write $y \sim \ZIMultinomial(n, r, q)$ to mean the random variable equal to
  $x_{1:d}$, where $x \sim \Multinomial(n, p)$, and
  \begin{align*}
    p_0 & = 1 - r \\
    p_i & = r q_i \: \mathrm{for} \: i \in \{1..d\}
  \end{align*}
\end{definition}

Note that
$
  \sum_{i=0}^d p_i = p_0 + \sum_{i=1}^d p_i = (1 - r) + r \sum_{i=1}^d q_i = 1
$
and so $p$ indeed defines a discrete probability distribution.
The first and second moments of the zero-inflated multinomial are given in
Proposition \ref{proposition:zim-statistics}.
We can now define a discrete update $\bar{g}$ using the $\ZIMultinomial$.
\begin{definition}[ZIM update]
  \label{definition:zimu}
  Fix constants $0 \leq r \leq 1$ and $c > 0$.
  Define $\bar{g}(w_k, \xi_k) \defeq x \odot \sign(\nabla F(w_k))$ where:
  \begin{align*}
    x & \sim \ZIMultinomial_{\xi_k}(n, r, q) \\
    q & \defeq \frac{|\nabla F(w_k)| + c}{\sum_{i=1}^d \left( |\nabla F(w_k)| + c \right)}
        = \frac{|\nabla F(w_k)| + c}{\norm{\nabla F(w_k)}_1 + cd}
  \end{align*}
\end{definition}
\noindent
The (non-zero-inflated) probabilities $q$ are designed such that the
expectation of $\bar{g}$ is proportional to the gradient.
The dimension of the weights is denoted $d$, and $c$ is a
Laplace-smoothing-like factor ensuring that $q$ is defined when $w_k = 0$.
The ZIM update satisfies Assumption \ref{assumption:dsg} as follows.
\begin{proposition}
  for ZIM update $\bar{g}(w_k, \xi_k)$ there exist scalars
    $\mu = \frac{n}{\sqrt{d} L + cd}$,
    $M = n$, and
    $M_G = n^2 - n$
  such that:
  \begin{equation*}
    \nabla F(w_k)^T \Ec{\xi_k}{\bar{g}(w_k, \xi_k)} \geq \alpha_k \mu \norm{\nabla F(w_k)}_2^2
  \end{equation*}
  and
  \begin{equation*}
    \Ec{\xi_k}{\norm{\bar{g}(w_k, \xi_k)}_2^2} \leq \alpha_k M + \alpha_k^2 M_G \norm{\nabla F(w_k)}_2^2
  \end{equation*}
\end{proposition}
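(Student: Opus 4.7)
My plan is to compute the first and second moments of $\bar{g}(w_k, \xi_k)$ explicitly and match them to the two inequalities of Assumption~\ref{assumption:dsg}(b) by identifying the zero-inflation parameter $r$ with the learning rate $\alpha_k$. Because $\bar{g}_i = x_i \cdot \sign(\nabla_i F(w_k))$ with $x_i$ marginally $\Binomial(n, r q_i)$, the moments from Proposition~\ref{proposition:zim-statistics} give $\Ec{\xi_k}{\bar{g}_i} = n \alpha_k q_i \, \sign(\nabla_i F(w_k))$ and $\Ec{\xi_k}{\bar{g}_i^2} = n \alpha_k q_i + (n^2 - n) \alpha_k^2 q_i^2$.

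For the dot-product bound, substituting the definition of $q$ and using $|\nabla_i F(w_k)| = \nabla_i F(w_k) \cdot \sign(\nabla_i F(w_k))$ yields
\[
  \nabla F(w_k)^T \Ec{\xi_k}{\bar{g}} = \frac{n \alpha_k}{\norm{\nabla F(w_k)}_1 + cd} \Bigl( \norm{\nabla F(w_k)}_2^2 + c \norm{\nabla F(w_k)}_1 \Bigr) \geq \frac{n \alpha_k \norm{\nabla F(w_k)}_2^2}{\norm{\nabla F(w_k)}_1 + cd}.
\]
To produce the uniform constant $\mu = n/(\sqrt{d} L + cd)$, I would apply Cauchy--Schwarz to get $\norm{\nabla F(w_k)}_1 \leq \sqrt{d} \norm{\nabla F(w_k)}_2$ and then use the Lipschitz-gradient bound $\norm{\nabla F(w_k)}_2 \leq L$, which is available under Assumptions~\ref{assumption:4.1} and~\ref{assumption:dsg}(a) once the orbit of $w_k$ is controlled.

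For the second-moment bound, summing the per-coordinate formula and using $\sum_i q_i = 1$ gives $\Ec{\xi_k}{\norm{\bar{g}}_2^2} = n \alpha_k + (n^2 - n) \alpha_k^2 \sum_i q_i^2$, so the stated choices $M = n$ and $M_G = n^2 - n$ reduce the task to verifying
\[
  \sum_i q_i^2 = \frac{\norm{\nabla F(w_k)}_2^2 + 2c \norm{\nabla F(w_k)}_1 + c^2 d}{(\norm{\nabla F(w_k)}_1 + cd)^2} \leq \norm{\nabla F(w_k)}_2^2.
\]

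I expect this last step to be the main obstacle. The trivial bound $\sum_i q_i^2 \leq 1$ is attained up to constants as gradients vanish, so it is too weak in the regime $\norm{\nabla F(w_k)}_2 < 1$; the inequality is only immediate once one knows $\norm{\nabla F(w_k)}_2$ is bounded below. I would therefore either exploit the algebraic structure of the ratio (comparing numerator and denominator term-by-term when the gradient is sufficiently large), combined with an auxiliary lower bound on $\norm{\nabla F(w_k)}_2$ coming from the geometry of the iterates, or, failing that, absorb a residual additive constant into $M$ at the cost of slightly weaker constants than stated.
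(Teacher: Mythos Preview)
Your approach to the first inequality is identical to the paper's: compute $\Ec{\xi_k}{\bar g}$ via the $\ZIMultinomial$ first moment, take the inner product with $\nabla F(w_k)$, drop the nonnegative $c\,\norm{\nabla F(w_k)}_1$ term, and then bound $\norm{\nabla F(w_k)}_1 \le \sqrt{d}\,L$ using Cauchy--Schwarz together with the Lipschitz assumption. This is exactly Proposition~\ref{proposition:mu-exists}.

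For the second inequality your computation of the second moment again coincides with the paper's Proposition~\ref{proposition:zimu-statistics}, arriving at
\[
  \Ec{\xi_k}{\norm{\bar g(w_k,\xi_k)}_2^2} \;=\; n\alpha_k + (n^2-n)\,\alpha_k^2 \sum_{i=1}^d q_i^2 .
\]
The obstacle you flagged---that matching $M=n$, $M_G=n^2-n$ requires $\sum_i q_i^2 \le \norm{\nabla F(w_k)}_2^2$, which fails as the gradient vanishes (the left side tends to $1/d$ while the right tends to $0$)---is real. The paper's proof does not resolve it: it simply cites Proposition~\ref{proposition:zimu-statistics} as supplying $\alpha_k$, $M$, $M_G$, without ever verifying the needed inequality. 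So you have not missed an idea present in the paper; the gap you identified is also a gap in the paper's argument, and your proposed remedies (a lower bound on $\norm{\nabla F(w_k)}_2$ along the iterates, or absorbing the residual into $M$) are the natural ways one would actually close it.
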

\begin{proof}
  Propositions \ref{proposition:mu-exists}
  and \ref{proposition:zimu-statistics} (Appendix \ref{section:multinomial-lemmas})
  give $\mu$ and $\alpha_k, M, M_G$, respectively.
\end{proof}

We can therefore specialise Proposition
\ref{proposition:convergence-fixed-learning-rate} to prove convergence of the
ZIM update:
Recall that
\begin{align*}
  \E{\frac{1}{K} \sum_{k=1}^K \norm{\nabla F(w_k)}_2^2}
    \quad \leq & \quad \frac{2(F(w_1) - \Finf)}{K \mu \bar{\alpha}} + \frac{LM}{\mu} \\
    \xrightarrow{K \to \infty} & \quad \frac{L M}{\mu}
\end{align*}
Then taking $r = \bar{\alpha}$ and substituting
$\mu = \frac{n}{\sqrt{d} L + cd}$, $M = n$
\begin{align*}
  \E{\frac{1}{K} \sum_{k=1}^K \norm{\nabla F(w_k)}_2^2}
    \xrightarrow{K \to \infty} & \quad \frac{L n}{\frac{n}{\sqrt{d} L + cd}}
    = \quad L (\sqrt{d} L + cd)
\end{align*}

Compare this to the discrete update used in BOLD~\cite{bold},
whose convergence bound is proportional to $d L$ versus our $\sqrt{d} L^2$.
Our bound is tighter when the Lipschitz constant $L$ grows slowly relative to
model dimension $d$.
As a concrete example,
take $L = 17$ from
\cite[Figure 3(a)]{lipschitz-estimation}
for the single layer MNIST network with 100 hidden units,
then calculate our bound as
$1.4 \times 10^{6}$ compared to $2.7 \times 10^{6}$ for BOLD.

\section{Empirical Evaluation}
\label{section:empirical}
We conclude with an empirical\footnote{
  experiment code: \href{https://github.com/hellas-ai/neurips2025-convergence-for-discrete-parameter-updates}{https://github.com/hellas-ai/neurips2025-convergence-for-discrete-parameter-updates}
}demonstration
of convergence for our approach,
compared to Stochastic Gradient Descent (SGD).
\begin{figure}[htbp]
    \centering
    \includegraphics[width=0.8\textwidth]{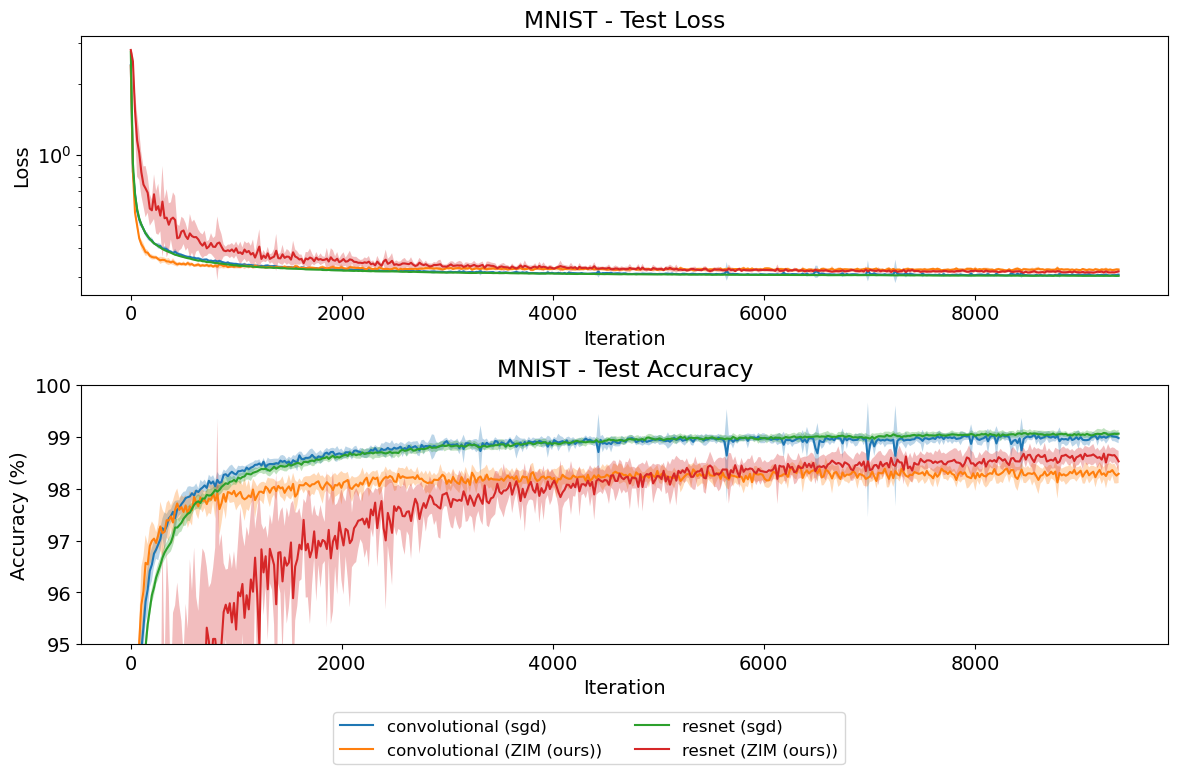}
    \caption{MNIST results: both convolutional and ResNet models converge over
    10 epochs under our discrete update (ZIM), compared to SGD. Each curve is
    averaged over 10 runs; shaded regions show $\pm 1$ std.}
    \label{fig:mnist_results}
\end{figure}
It is clear that both the simple convolutional and larger ResNet models converge.
Although our method pays a $0.5\% - 1\%$ accuracy penalty (reflecting
the `noise floor' discussed in Section \ref{section:dsg}),
it works `out of the box' with existing architectures.
We view these results as a proof of concept. Going forward, we expect that not only can
significant improvements be made by modifying the architectures and
update schemes used, but also that our method opens the door to the possibility of fully
discrete learning systems.

\bibliography{main}

\appendix

\section{Lemmas for Section \ref{section:dsg}}
\label{section:dsg-lemmas}

We will now give a proposition analogous to \cite[Lemma 4.2]{optimization} under
Assumption \ref{assumption:dsg}.
\begin{proposition}
  \label{proposition:alt-4.2}
  Let $\bar{g}$ be a discrete update satisfying Assumption \ref{assumption:dsg}.
  Then we have
  \[
    \Ec{\xi_k}{F(w_{k+1})} - F(w_k)
      \leq - \nabla F(w_k)^T \Ec{\xi_k}{\bar{g}(w_k, \xi_k)} + \frac{1}{2} L \Ec{\xi_k}{\norm{\bar{g}(w_k, \xi_k)}_2^2}
  \]
\end{proposition}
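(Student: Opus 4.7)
The plan is to mimic the standard descent lemma argument used for \cite[Lemma 4.2]{optimization}, but without an external learning rate factor, since in Definition \ref{definition:dsg} the update $\bar{g}(w_k, \xi_k)$ already represents the full step taken at iteration $k$ (no multiplicative $\alpha_k$ appears outside). The single ingredient needed is inequality \eqref{equation:4.3}, which is available from Assumption \ref{assumption:4.1}.

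First I would apply \eqref{equation:4.3} with $w \defeq w_{k+1}$ and $\bar{w} \defeq w_k$ to get
\[
  F(w_{k+1}) \leq F(w_k) + \nabla F(w_k)^T (w_{k+1} - w_k) + \tfrac{1}{2} L \norm{w_{k+1} - w_k}_2^2.
\]
Next I would substitute the update rule from Definition \ref{definition:dsg}, i.e.\ $w_{k+1} - w_k = -\bar{g}(w_k, \xi_k)$, yielding
\[
  F(w_{k+1}) - F(w_k) \leq - \nabla F(w_k)^T \bar{g}(w_k, \xi_k) + \tfrac{1}{2} L \norm{\bar{g}(w_k, \xi_k)}_2^2.
\]
Finally I would take the conditional expectation $\Ec{\xi_k}{\cdot}$ of both sides. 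Since $w_k$ is determined by the history prior to drawing $\xi_k$, both $F(w_k)$ and $\nabla F(w_k)$ are constant under $\Ec{\xi_k}{\cdot}$, so they pass through the expectation and the inner product; linearity of expectation then gives exactly the stated inequality.

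The main obstacle is essentially bookkeeping rather than analysis: one must be careful that $\nabla F(w_k)$ is measurable with respect to the sigma-algebra conditioned on (so it commutes with $\Ec{\xi_k}{\cdot}$), and that no stray $\alpha_k$ factor appears on $\bar{g}$ (in contrast to \cite[Lemma 4.2]{optimization}), since in our discrete scheme the learning rate is instead encoded implicitly through the moment bounds in Assumption \ref{assumption:dsg}(b). This is precisely what makes the subsequent proof of Proposition \ref{proposition:convergence-fixed-learning-rate} go through by plugging \eqref{equation:dot-bound} and \eqref{equation:moment2-bound} into the inequality just derived.
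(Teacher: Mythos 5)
Your proof is correct and follows exactly the same route as the paper: apply inequality \eqref{equation:4.3} with $w = w_{k+1}$, $\bar{w} = w_k$, substitute $w_{k+1} - w_k = -\bar{g}(w_k, \xi_k)$ from Definition \ref{definition:dsg}, and take $\Ec{\xi_k}{\cdot}$ using that $F(w_k)$ and $\nabla F(w_k)$ do not depend on $\xi_k$. Your remark about the absence of an $\alpha_k$ factor is also consistent with how the paper sets up the discrete scheme.
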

\begin{proof}
  Following \cite[Lemma 4.2]{optimization}, we have
  \begin{align*}
    F(w_{k+1}) - F(w_k)
      & \leq \nabla F(w_k)^T (w_{k+1} - w_k) + \frac{1}{2} L \norm{w_{k+1} - w_k}_2^2   \tag*{By \eqref{equation:4.3}} \\
      & \leq - \nabla F(w_k)^T \bar{g}(w_k, \xi_k) + \frac{1}{2} L \norm{\bar{g}(w_k, \xi_k)}_2^2 \tag*{Def. \ref{definition:dsg}}
  \end{align*}
  Taking expectations of both sides and noting that $F(w_k)$ does not depend on
  $\xi_k$, we obtain the result:
  \begin{align*}
    \Ec{\xi_k}{F(w_{k+1})} - F(w_k)
      & \leq - \nabla F(w_k)^T \Ec{\xi_k}{\bar{g}(w_k, \xi_k)}
           + \frac{1}{2} L \Ec{\xi_k}{\norm{\bar{g}(w_k, \xi_k)}_2^2}
           \tag{Linearity of $\Ebb$}
  \end{align*}
\end{proof}

Similarly, we can give a proposition analogous to \cite[Lemma 4.4]{optimization}.
The sole difference is the final term: theirs contains an $\alpha^2_k$, whereas
ours only has an $\alpha_k$.
\begin{proposition}
  \label{proposition:alt-4.4}
  \[
    \Ec{\xi_k}{F(w_{k+1})} - F(w_k)
    \leq - (\mu - \frac{1}{2} \alpha_k L M_G) \alpha_k \norm{\nabla F(w_k)}^2_2 + \frac{1}{2} \alpha_k L M
  \]
\end{proposition}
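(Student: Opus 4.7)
The plan is simply to chain Proposition \ref{proposition:alt-4.2} with the two bounds of Assumption \ref{assumption:dsg}(b); the argument is an algebraic rearrangement and requires no further analytic machinery. Roughly, Proposition \ref{proposition:alt-4.2} reduces the left-hand side to a weighted sum of a first-moment term and a second-moment term in $\bar{g}(w_k, \xi_k)$, each of which is precisely the quantity controlled by one of \eqref{equation:dot-bound}--\eqref{equation:moment2-bound}.

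Concretely, I would proceed in three steps. First, invoke Proposition \ref{proposition:alt-4.2} to obtain
\[
\Ec{\xi_k}{F(w_{k+1})} - F(w_k) \leq -\nabla F(w_k)^T \Ec{\xi_k}{\bar{g}(w_k, \xi_k)} + \frac{1}{2} L \Ec{\xi_k}{\norm{\bar{g}(w_k, \xi_k)}_2^2}.
\]
Second, apply \eqref{equation:dot-bound} to the first term on the right, flipping the sign to turn the stated lower bound into the upper bound $-\alpha_k \mu \norm{\nabla F(w_k)}_2^2$; and apply \eqref{equation:moment2-bound} to the second term, yielding the upper bound $\frac{1}{2} \alpha_k L M + \frac{1}{2} \alpha_k^2 L M_G \norm{\nabla F(w_k)}_2^2$. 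Third, add these two bounds and collect the coefficients of $\alpha_k \norm{\nabla F(w_k)}_2^2$: the $-\alpha_k \mu$ contribution combines with the $\frac{1}{2} \alpha_k^2 L M_G$ contribution to produce the factor $-(\mu - \frac{1}{2} \alpha_k L M_G)\alpha_k$ in front of $\norm{\nabla F(w_k)}_2^2$, leaving the stand-alone constant $\frac{1}{2} \alpha_k L M$, which matches the claimed inequality.

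There is no real obstacle: the manipulation is linear and entirely term-by-term. The one point worth flagging is the direction of \eqref{equation:dot-bound}, which is written as a lower bound on the descent inner product and must be negated before being combined with the upper bound on the second moment. Structurally, the reason this proof departs from \cite[Lemma 4.4]{optimization} is the choice in \eqref{equation:moment2-bound} to let the $M$-term scale as $\alpha_k$ rather than $\alpha_k^2$; this single modification propagates into the residual $\frac{1}{2} \alpha_k L M$ here and ultimately into the non-vanishing noise floor $L M / \mu$ appearing in Proposition \ref{proposition:convergence-fixed-learning-rate}.
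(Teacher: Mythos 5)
Your proposal is correct and follows exactly the paper's own argument: invoke Proposition \ref{proposition:alt-4.2}, substitute the two bounds of Assumption \ref{assumption:dsg}(b) (negating the lower bound \eqref{equation:dot-bound}), and collect the coefficients of $\alpha_k \norm{\nabla F(w_k)}_2^2$. Your closing remark about the $\alpha_k M$ versus $\alpha_k^2 M$ scaling propagating into the noise floor also matches the paper's own commentary.
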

\begin{proof}
  \begin{align*}
    \Ec{\xi_k}{F(w_{k+1})} - F(w_k)
      & \leq - \nabla F(w_k)^T \Ec{\xi_k}{\bar{g}(w_k, \xi_k)} + \frac{1}{2} L \Ec{\xi_k}{\norm{g(w_k, \xi_k)}_2^2}
          \tag{Proposition \ref{proposition:alt-4.2}} \\
      & \leq - \alpha_k \mu \norm{\nabla F(w_k)}_2^2 +
        \frac{1}{2} \alpha_k L M + \frac{1}{2} \alpha_k^2 L M_G \norm{\nabla F(w_k)}_2^2
          \tag{Assumption \ref{assumption:dsg}} \\
      & \leq - (\mu - \frac{1}{2} \alpha_k L M_G) \alpha_k \norm{\nabla F(w_k)}^2_2 + \frac{1}{2} \alpha_k L M
  \end{align*}
\end{proof}

\section{Lemmas for Section \ref{section:multinomial}}
\label{section:multinomial-lemmas}
\begin{proposition}[First and Second Moments of $\ZIMultinomial$]
  \label{proposition:zim-statistics}
  If $y \sim \ZIMultinomial(n, r, q)$ then...
  \begin{align*}
    \E{y}            & = n r q_i \\
    \E{\norm{y}^2_2} & = n r + r^{2}(n^{2}-n)\sum_{i=1}^{d} q_i^{2} \tag{$\sum_{i}q_i=1$}
  \end{align*}
\end{proposition}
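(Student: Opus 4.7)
My plan is to derive both moments from standard multinomial identities. By the definition of $\ZIMultinomial$, the vector $y$ is simply the projection of $x \sim \Multinomial(n, p)$ onto coordinates $1, \ldots, d$, with $p_0 = 1 - r$ and $p_i = r q_i$ for $i \in \{1, \ldots, d\}$. Each coordinate $y_i = x_i$ is marginally $\Binomial(n, p_i)$, so the required formulas follow by substitution.

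For the first moment, I would simply invoke $\E{x_i} = n p_i$ to obtain $\E{y_i} = n r q_i$.

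For the second moment, I would expand $\E{\norm{y}_2^2} = \sum_{i=1}^d \E{y_i^2}$ and use $\E{y_i^2} = \V{y_i} + \E{y_i}^2$ coordinate-wise. Substituting the binomial variance $\V{y_i} = n p_i (1 - p_i)$ yields
\[
  \E{y_i^2} = n p_i (1 - p_i) + n^2 p_i^2 = n p_i + (n^2 - n) p_i^2.
\]
Summing over $i$ and replacing $p_i = r q_i$ then gives
\[
  \E{\norm{y}_2^2} = n r \sum_{i=1}^d q_i + r^2 (n^2 - n) \sum_{i=1}^d q_i^2 = n r + r^2 (n^2 - n) \sum_{i=1}^d q_i^2,
\]
where the first sum collapses via $\sum_{i=1}^d q_i = 1$.

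I do not anticipate a real obstacle here; everything reduces to standard multinomial facts. The only subtlety worth flagging is the index range: only the coordinates $i \in \{1, \ldots, d\}$ contribute to $\norm{y}_2^2$ (the auxiliary coordinate $x_0$, with weight $p_0 = 1 - r$, is discarded when projecting $x$ to $y$), so the normalisation $\sum_{i=1}^d q_i = 1$ applies cleanly without any $p_0$ correction.
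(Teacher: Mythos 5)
Your proof is correct and follows essentially the same route as the paper: both reduce to the marginal binomial distribution of each coordinate $x_i \sim \Binomial(n, p_i)$, use $\E{x_i^2} = n p_i + (n^2 - n) p_i^2$ (which you derive explicitly from variance plus squared mean, while the paper cites it directly), and then substitute $p_i = r q_i$ and collapse $\sum_{i=1}^d q_i = 1$. Your remark about discarding the auxiliary coordinate $x_0$ is a helpful clarification but does not change the argument.
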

\begin{proof}
  The first moment calculation is straightforward: $\E{y_i} = \E{x_i} = n r q_i$.
  The second is as follows:

  \begin{align*} \E{\|y\|_2^{2}} &= \E{\sum_{i=1}^{d} y_i^{2}} \tag{Definition of $\|\cdot\|_2^{2}$} \\
    &= \sum_{i=1}^{d} \E{y_i^{2}} \tag{Linearity of $\E{\cdot}$} \\
    &= \sum_{i=1}^{d} \E{x_i^{2}} \tag{$y_i=x_i$, underlying multinomial counts} \\
    &= \sum_{i=1}^{d} \bigl(n p_i + n(n-1)p_i^{2}\bigr) \tag{$x_i\sim\mathrm{Binomial}(n,p_i)$} \\
    &= n \sum_{i=1}^{d} p_i \;+\; n(n-1) \sum_{i=1}^{d} p_i^{2} \tag{Algebra} \\
    &= n r \sum_{i=1}^{d} q_i \;+\; n(n-1) r^{2}\sum_{i=1}^{d} q_i^{2} \tag{$p_i=r q_i$} \\
    &= n r + r^{2}(n^{2}-n)\sum_{i=1}^{d} q_i^{2} \tag{$\sum_{i}q_i=1$}
  \end{align*}

\end{proof}

\begin{proposition}[First and Second Moments of ZIM update]
  \label{proposition:zimu-statistics}
  The first and second moments of the ZIM update $\bar{g}(w_k, \xi_k)$ are
  \begin{align*}
    \E{\bar{g}(w_k, \xi_k)}            & = \frac{n r}{\norm{\nabla F(w_k)}_1 + cd} (\nabla F(w_k) + c\, \sign(\nabla F(w_k))) \\
    \E{\norm{\bar{g}(w_k, \xi_k)}_2^2} & = nr + r^2 (n^2 - n) \sum^d_{i=1} q_i^2
  \end{align*}
\end{proposition}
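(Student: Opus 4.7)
The plan is to compute each moment directly from the componentwise definition $\bar{g}(w_k, \xi_k)_i = x_i \cdot \sign(\nabla F(w_k))_i$ (with $x \sim \ZIMultinomial_{\xi_k}(n, r, q)$) and then appeal to Proposition \ref{proposition:zim-statistics} for the marginal statistics of $x$.

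For the first moment, I would proceed coordinate by coordinate. Since $\sign(\nabla F(w_k))$ is deterministic given $w_k$, linearity of expectation gives $\E{\bar{g}(w_k,\xi_k)_i} = \sign(\nabla F(w_k))_i \cdot n r q_i$. Substituting the definition of $q_i$ from Definition \ref{definition:zimu} and applying the elementary identity $\sign(v)(|v|+c) = v + c\,\sign(v)$ coordinatewise collapses the sign into the numerator; pulling out the common denominator $\norm{\nabla F(w_k)}_1 + cd$ then assembles the componentwise formulas into the claimed vector identity.

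For the second moment, the key observation is that squaring annihilates the sign: $(x_i \sign(\nabla F(w_k))_i)^2 = x_i^2 \cdot \sign(\nabla F(w_k))_i^2 \leq x_i^2$, with equality whenever $\nabla F(w_k)_i \neq 0$. Summing over $i$ yields $\norm{\bar{g}(w_k,\xi_k)}_2^2 \leq \norm{x}_2^2$, so taking expectations and invoking Proposition \ref{proposition:zim-statistics} directly gives the bound $nr + r^2(n^2-n)\sum_{i=1}^d q_i^2$.

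The only minor wrinkle is the edge case where a coordinate of $\nabla F(w_k)$ vanishes under the $\{-1, 0, +1\}$ sign convention: there the sign factor zeroes out the corresponding contribution, turning the second-moment identity into a strict inequality. Since this proposition is used only to supply the upper bound \eqref{equation:moment2-bound} in Assumption \ref{assumption:dsg}(b), this is harmless, and I expect no real obstacle beyond carefully tracking these elementary manipulations.
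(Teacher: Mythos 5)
Your argument is essentially the paper's own proof: the first moment via linearity and the identity $\sign(v)\odot(|v|+c) = v + c\,\sign(v)$, and the second moment by killing the sign under squaring and invoking Proposition \ref{proposition:zim-statistics}. The one point where you go beyond the paper is the $\nabla F(w_k)_i = 0$ edge case: the paper's proof simply asserts $\sign(u)^2 = 1$, whereas you correctly note that under the $\{-1,0,+1\}$ convention the stated second-moment equality degrades to an inequality $\leq$, which — as you observe — is all that is needed for the upper bound \eqref{equation:moment2-bound}.
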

\begin{proof}
  \begin{align*}
    \Ec{\xi_k}{\bar g(w_k,\xi_k)} &= \Ec{\xi_k}{\,x \odot \sign(\nabla F(w_k))}
        \tag{Definition of $\bar g$} \\
    &= \sign(\nabla F(w_k))\,\odot\,\Ec{\xi_k}{x}
        \tag{$\sign(\nabla F(w_k))$ is deterministic} \\
    &= \sign(\nabla F(w_k))\,\odot\, (n r\,q)
        \tag{$\E{x}=n r q$ (Prop.\;4.2)} \\
    &= n r \;\sign(\nabla F(w_k))\,\odot\, \frac{\lvert\nabla F(w_k)\rvert + c} {\lVert\nabla F(w_k)\rVert_1 + c d}
        \tag{Definition of $q$} \\
    &= \frac{n r}{\lVert\nabla F(w_k)\rVert_1 + c d}\, \bigl(\nabla F(w_k) + c\,\sign(\nabla F(w_k))\bigr)
        \tag{$|u|\odot\sign(u)=u$}.
  \end{align*}

  And
  \begin{align*}
    \Ec{\xi_k}{\|\bar g(w_k,\xi_k)\|_2^{2}} &= \Ec{\xi_k}{\sum_{i=1}^{d} \bar g_i^{2}}
        \tag{Definition of $\|\cdot\|_2^{2}$} \\
    &= \sum_{i=1}^{d} \Ec{\xi_k}{\bigl(x_i\,\sign(\nabla F(w_k)_i)\bigr)^{2}}
        \tag{$\bar g_i = x_i\,\sign(\nabla F(w_k))_i$} \\
    &= \sum_{i=1}^{d} \Ec{\xi_k}{x_i^{2}}
        \tag{$\sign(u)^{2}=1$} \\
    &= \sum_{i=1}^{d}\!\bigl(n p_i + n(n-1)p_i^{2}\bigr)
        \tag{$x_i\sim\mathrm{Binomial}(n,p_i)$} \\
    &= n\sum_{i=1}^{d} p_i + n(n-1)\sum_{i=1}^{d} p_i^{2}
        \tag{Algebra} \\
    &= n r + r^{2}(n^{2}-n)\sum_{i=1}^{d} q_i^{2}
        \tag{$p_i=r\,q_i,\;\sum_i q_i=1$}
  \end{align*}
\end{proof}
\vspace{-0.4cm}
\begin{proposition}
  \label{proposition:mu-exists}
  There exists a $\mu$ such that
  \begin{equation*}
    \nabla F(w_k)^T \Ec{\xi_k}{\bar{g}(w_k, \xi_k)} \geq \alpha_k \mu \norm{\nabla F(w_k)}_2^2
  \end{equation*}
  with $\alpha_k = r$ and $\mu = \frac{n}{\sqrt{d}L + cd}$.
\end{proposition}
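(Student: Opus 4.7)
The plan is to start from the closed-form expectation already computed in Proposition~\ref{proposition:zimu-statistics} and reduce the desired inequality to a purely scalar comparison between norms of $\nabla F(w_k)$.

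First, I would left-multiply the identity
\[
  \Ec{\xi_k}{\bar g(w_k,\xi_k)} \;=\; \frac{nr}{\norm{\nabla F(w_k)}_1 + cd}\bigl(\nabla F(w_k) + c\,\sign(\nabla F(w_k))\bigr)
\]
by $\nabla F(w_k)^T$. Using the two elementary identities $\nabla F(w_k)^T \nabla F(w_k) = \norm{\nabla F(w_k)}_2^2$ and $\nabla F(w_k)^T \sign(\nabla F(w_k)) = \norm{\nabla F(w_k)}_1$, this yields
\[
  \nabla F(w_k)^T \Ec{\xi_k}{\bar g(w_k,\xi_k)} \;=\; \frac{nr}{\norm{\nabla F(w_k)}_1 + cd}\bigl(\norm{\nabla F(w_k)}_2^2 + c\,\norm{\nabla F(w_k)}_1\bigr).
\]
Since $c\,\norm{\nabla F(w_k)}_1 \geq 0$, I can drop this term to obtain a clean lower bound of the form $\frac{nr \norm{\nabla F(w_k)}_2^2}{\norm{\nabla F(w_k)}_1 + cd}$.

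Next, substituting $\alpha_k = r$ and the proposed $\mu = \frac{n}{\sqrt{d}L + cd}$, the target inequality reduces to showing $\norm{\nabla F(w_k)}_1 + cd \leq \sqrt{d}\,L + cd$, i.e.\ $\norm{\nabla F(w_k)}_1 \leq \sqrt{d}\,L$. I would establish this in two steps: apply the standard norm inequality $\norm{v}_1 \leq \sqrt{d}\,\norm{v}_2$ (a one-line Cauchy--Schwarz argument) to reduce to $\norm{\nabla F(w_k)}_2 \leq L$, and then invoke this latter bound on the iterates $w_k$ lying in the open set of Assumption~\ref{assumption:dsg}(a). For the edge case $\nabla F(w_k) = 0$ the inequality holds trivially since both sides vanish.

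The main obstacle is the last step: while $L$ in Assumption~\ref{assumption:4.1} is introduced as a \emph{Lipschitz} constant for $\nabla F$ rather than a uniform bound on its magnitude, the stated $\mu$ implicitly requires that $L$ also controls $\norm{\nabla F(w_k)}_2$ on the region containing the iterates. I would make this explicit in the proof — either by strengthening Assumption~\ref{assumption:dsg}(a) to guarantee $\sup_k \norm{\nabla F(w_k)}_2 \leq L$, or by reinterpreting $L$ as the maximum of the Lipschitz constant and the gradient norm bound on the relevant open set — so that the chain $\norm{\nabla F(w_k)}_1 \leq \sqrt d\,\norm{\nabla F(w_k)}_2 \leq \sqrt d\,L$ goes through. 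Once this is settled, the conclusion follows immediately from the monotonicity of $x \mapsto \frac{n}{x + cd}$.
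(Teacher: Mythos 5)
Your proposal follows essentially the same route as the paper's own proof: substitute the closed-form expectation from Proposition~\ref{proposition:zimu-statistics}, use $\nabla F(w_k)^T\sign(\nabla F(w_k)) = \norm{\nabla F(w_k)}_1$, drop the nonnegative term $c\,\norm{\nabla F(w_k)}_1$, and bound the denominator via $\norm{\nabla F(w_k)}_1 \leq \sqrt{d}\,L$. The obstacle you flag is real and, notably, is present but unaddressed in the paper itself, which simply asserts ``by Lipschitz-continuity and Cauchy--Schwarz, $\norm{\nabla F(w_k)}_1 \leq \sqrt{d}\,L$'' even though Assumption~\ref{assumption:4.1} bounds only the variation of $\nabla F$, not its magnitude --- so your suggestion to make the gradient-norm bound an explicit hypothesis is a genuine improvement on the paper's argument rather than a defect of yours.
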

\begin{proof}
  By Lipschitz-continuity and Cauchy-Schwarz, we have $\norm{\nabla F(w_k)}_1 \leq \sqrt{d} L$. Deriving,
  \begin{align*}
    \nabla F(w_k)^T \Ec{\xi_k}{\bar{g}(w_k, \xi_k)}
      & = \nabla F(w_k)^T \frac{n r}{\norm{\nabla F(w_k)}_1 + cd} \bigl(\nabla F(w_k) + c\, \sign(\nabla F(w_k))\bigr) \\
      & = \frac{n r}{\norm{\nabla F(w_k)}_1 + cd} \bigl(\norm{\nabla F(w_k)}_2^2 + c\, \norm{\nabla F(w_k)}_1\bigr) \\
      & \geq \frac{n r}{\norm{\nabla F(w_k)}_1 + cd} \norm{\nabla F(w_k)}_2^2
        \tag{$c\, \norm{\nabla F(w_k)}_1 \geq 0$} \\
      & \geq r \frac{n}{\sqrt{d} L + cd} \norm{\nabla F(w_k)}_2^2
        \tag{Lipschitz Continuity, Cauchy-Schwarz} \\
      & = \alpha_k \mu \norm{\nabla F(w_k)}_2^2
  \end{align*}
  \vspace{-\baselineskip}
\end{proof}

\end{document}